\theoremstyle{plain}
\newtheorem{theorem}{Theorem}[section]
\newtheorem{proposition}[theorem]{Proposition}
\newtheorem{corollary}[theorem]{Corollary}
\theoremstyle{definition}
\theoremstyle{remark}
\icmltitlerunning{Neural FIM for learning Fisher Information Metrics from point cloud data}
\def\eqref#1{equation~\ref{#1}}
\def\Eqref#1{Equation~\ref{#1}}
\def\1{\bm{1}}
\def\mA{{\bm{A}}}
\def\mD{{\bm{D}}}
\def\mJ{{\bm{J}}}
\def\mK{{\bm{K}}}
\def\mP{{\bm{P}}}
\def\mQ{{\bm{Q}}}
\def\mU{{\bm{U}}}
\def\mX{{\bm{X}}}
\DeclareMathAlphabet{\mathsfit}{\encodingdefault}{\sfdefault}{m}{sl}
\SetMathAlphabet{\mathsfit}{bold}{\encodingdefault}{\sfdefault}{bx}{n}
\def\gF{{\mathcal{F}}}
\def\gN{{\mathcal{N}}}
\def\gP{{\mathcal{P}}}
\def\gX{{\mathcal{X}}}
\newcommand{\pdata}{p_{\rm{data}}}
\newcommand{\E}{\mathbb{E}}
\newcommand{\R}{\mathbb{R}}
\DeclareMathOperator*{\argmin}{arg\,min}
\begin{document}

\twocolumn[
\icmltitle{Neural FIM for learning Fisher information metrics from point cloud data}




\icmlsetsymbol{equal}{*}
\icmlsetsymbol{eco}{\Cross}

\begin{icmlauthorlist}
\icmlauthor{Oluwadamilola Fasina}{equal,yale_amath}
\icmlauthor{Guillaume Huguet}{equal,mila,mila_math}
\icmlauthor{Alexander Tong}{mila,mila_cs}
\icmlauthor{Yanlei Zhang}{mila,mila_math}
\icmlauthor{Guy Wolf}{mila,mila_math}
\icmlauthor{Maximilian Nickel}{fair}
\icmlauthor{Ian Adelstein}{eco,yale_math}
\icmlauthor{Smita Krishnaswamy}{eco,yale_cs,yale_genetics,yale_amath}
\end{icmlauthorlist}

\icmlaffiliation{yale_amath}{Applied Math Program, Yale University, New Haven, CT, USA.}
\icmlaffiliation{mila}{Mila - Quebec AI Institute, Montreal, QC, Canada.}
\icmlaffiliation{yale_cs}{Department of Computer Science, Yale University, New Haven, CT, USA.}
\icmlaffiliation{yale_genetics}{Department of Genetics, Yale University, New Haven, CT, USA.}
\icmlaffiliation{yale_math}{Department of Math, Yale University, New Haven, CT, USA.}
\icmlaffiliation{mila_math}{Department of Mathematics and Statistics, Universit\'{e} de Montr\'{e}al, Montreal, QC, Canada.}
\icmlaffiliation{mila_cs}{Department of Computer Science and Operations Research, Universit\'{e} de Montr\'{e}al, Montreal, QC, Canada.}
\icmlaffiliation{fair}{FAIR, Meta AI}

\icmlcorrespondingauthor{Smita Krishnaswamy}{smita.krishnaswamy@yale.edu}


\icmlkeywords{Machine Learning, ICML}

\vskip 0.3in
]



\printAffiliationsAndNotice{\icmlEqualContribution, \icmlcoseionr} 

\begin{abstract}

Although data diffusion embeddings are ubiquitous in unsupervised learning and have proven to be a viable technique for uncovering the underlying intrinsic geometry of data, diffusion embeddings are inherently limited due to their discrete nature. To this end, we propose \emph{neural FIM}, a method for computing the Fisher information metric (FIM) from point cloud data - allowing for a continuous manifold model for the data.  Neural FIM creates an extensible metric space from discrete point cloud data such that information from the metric can inform us of manifold characteristics such as volume and geodesics. We demonstrate Neural FIM's utility in selecting parameters for the PHATE visualization method as well as its ability to obtain information pertaining to local volume illuminating branching points and cluster centers embeddings of a toy dataset and two single-cell datasets of IPSC reprogramming and PBMCs (immune cells).

\end{abstract}

\section{Introduction}

An important goal of unsupervised learning is understanding the underlying shape or geometry of data \cite{bronstein2017geometric,cheng2019diffusion,de2017diffusion,tsitsulin2019shape}. A key paradigm here is the manifold assumption which hypothesizes that high dimensional data, particularly from scientific domains, lies on a lower dimensional smoothly varying manifold (see \citet{huguet_manifold_2022,he2014intrinsic,bhaskar2022diffusion,lin2008riemannian}). Prior methods for learning data manifolds use data affinity kernels that  compute a pairwise distance matrix from a data set, then pass the distances through a kernel function (such as a Gaussian kernel function) to convert distances to affinities \cite{belkin2003laplacian,bunte2012stochastic,mika1998kernel}. Eigenvectors of such an affinity matrix give the data a manifold-intrinsic coordinate representation.  This method, while successful in some respects, has a key disadvantage that it is implicitly biased by the particular sampling of the data that is given, together with its irregularities. Further, there is usually no straightforward way of extending such manifold coordinate representations to unseen points. 

Here we propose a neural-network based method of directly learning a Riemannian metric for data called the neural FIM. Loosely speaking, a Riemannian metric is an infinitesimal generator of manifold-intrinsic length and volume, based on an inner product structure on the tangent space of every point. Typically, a Riemannian metric cannot be learned from discrete data as there is no continuous model of the manifold. Although \citet{bengio2003out,schoeneman2017error,law2006incremental,dadkhahi2017out} developed methods for extending coordinate manifold representations to unseen points, none of these methods admit continuous manifold models and involve expensive computations. Neural FIM is able to learn a continuous manifold model by using a neural network to embed data points into a latent space and creates a continuous implicit model of the data from which we can compute the metric.

The specific Riemannian metric we aim to learn the data manifold is the Fisher Information Metric (FIM). This type of metric is defined on {\em statistical manifolds}, manifolds where each datapoint is a probability distribution \cite{lauritzen1987statistical,lafferty2005diffusion,noguchi1992geometry}. We obtain such a pointwise probability distribution on point cloud data by way of a data diffusion operator, as first defined in the seminal work on diffusion maps~\cite{coifman_diffusion_2006}. After an affinity kernel is computed, it is row normalized to a stochastic matrix. This normalized matrix is treated as a Markovian operator which defines a random walk or a diffusion on the data. We associate each data point to the transition probability distribution given by its row of the stochastic matrix, and thus realize the point cloud data as a statistical manifold. 

By utilizing a mathematical connection between the differential form of Jensen-Shannon Distance (JSD) (the square-root of the standard Jenson-Shannon (JS) divergence) and neural FIM, we derive a method of training the neural FIM on the basis of distances between PHATE embeddings that use JSD.  An advantage of this approach is that---similar to PHATE---the embedding is globally contextualized due to the information-theoretic distances that are computed. We can then use the FIM to compute geometric quantities on the statistical manifold such as length and volume.  We show how the geodesic or Fisher-Rao distance between pairs of points can be computed using an auxiliary neural Ordinary Differential Equation (ODE) network~\cite{chen2018neural}. This distance can be used for novel embeddings and downstream tasks. The magnitude of the volume element captures local distinguishability and can be used to reveal branching points in hierarchical data or decision boundaries in classification problems. 

We showcase our results on three types of tasks. First, we show how to use the FIM to explore the space of parameters for the PHATE embedding method. Here, the statistical manifold is created from the diffusion operator resulting from various embedding parameters (on the same dataset). In specific, we explore selection of the time-of-diffusion and bandwidth variables. The second task involves computing the FIM of 3 different datasets: a toy tree dataset, an IPSC reprogramming mass cytometry dataset \cite{zunder2015continuous} and a pbmc single cell RNA-sequencing dataset~\cite{PBMCs}. These statistical manifolds correspond to transition probability distributions of each datapoint within the dataset. Both the neural FIM embeddings and information from the FIM including volume and trace are shown for each of the three datasets. We see that the volume highlights freedoms of movements with branchpoints having higher volume. Finally, we utilize the neural ODE network to compute geodesic paths within the embedding between points. First we show this on data sampled from a sphere, and then on the IPSC dataset. Remarkably, in the IPSC dataset the geodesic follows the path of reprogramming of a cell from it starting state.\footnote{Code is available at: \url{https://github.com/guillaumehu/phate_fim}}

The key contributions of this work include: 
\begin{itemize}
    \item Conceptually connecting data diffusion embeddings with statistical manifolds in order to derive a manifold model of the data, complete with a continuously-defined Fisher Information Metric tensor.  
    \item Proposing the neural FIM method for extensible FIM computations (i.e., extensible to unseen data) trained by using Jensen-Shannon divergence between data diffusion probabilities. 
    \item Proposing a neural-ODE based method for computing geodesic paths and distances based on the FIM. 
    \item Showcasing the use of neural FIM in selecting parameters, visualizing single cell data, and locally extracting information about the volume, trace, and eigenspectrum of the metric to understand the underlying manifold geometry of the data.
\end{itemize}

\section{Background}

A useful assumption in manifold learning is that data measured in a high-dimensional ambient space originates from an intrinsic low-dimensional manifold. The manifold assumption asserts that if $\mathcal{M}^d$ is a hidden $d$ dimensional manifold, it is observable by a collection of $n \gg d$ nonlinear functions $f_1,\ldots,f_n : \mathcal{M}^d \to \R$ which enable its immersion in a high dimensional ambient space as $F(\mathcal{M}^d) = \{\mathbf{f}(z) = (f_1(z),\ldots,f_n(z))^T : z \in \mathcal{M}^d \} \subseteq \R^n$ from which data is collected. Conversely, given  data $X = \{x_1, \ldots, x_N\} \subset \R^n$ of high dimensional observations, manifold learning methods assume the data originates from a sampling $Z = \{z_i\}_{i=1}^N \subset \mathcal{M}^d$ of the underlying manifold via $x_i = \mathbf{f}(z_i)$,  and aim to learn a low dimensional intrinsic representation that approximates the manifold geometry of~$\mathcal{M}^d$.

\subsection{Data Diffusion}\label{sec: data_diffusion}

A popular class of methods for manifold learning uses a data diffusion operator, which models data based on transition or random walk probabilities through the data. Methods that use a data diffusion operator include diffusion maps \cite{coifman_diffusion_2006}, PHATE \cite{moon_visualizing_2019}, tSNE \cite{van_der_maaten_visualizing_2008}, and diffusion pseudotime \cite{haghverdi2016diffusion}.  One can learn the manifold geometry with data diffusion by first computing local similarities defined via a kernel $\mathcal{K}(x,y)$, $x,y \in F (\mathcal{M}^d)$. 
 We note that a popular choice for a kernel is the Gaussian kernel $\mathcal{G}(x,y)=\exp (-\|x-y\|^2 / \sigma)$, where $\sigma > 0$ is interpreted as a user-configurable scale parameter. However, this choice encodes sampling density information together with local geometric information.

 To construct a diffusion geometry that is robust to sampling density variations, we use an anisotropic kernel $\mathcal{K}(x,y) = \frac{\mathcal{G}(x,y)}{\|\mathcal{G}(x,\cdot)\|_1^{\alpha} \|\mathcal{G}(y,\cdot)\|_1^{\alpha}},$ where $0 \leq \alpha \leq 1$ controls the separation of geometry from density, with $\alpha = 0$ yielding the classic Gaussian kernel, and $\alpha = 1$ completely removing density and 
 essentially providing uniform sampling of the manifold. Finally, we row-normalize $\mathcal{K}$  to define transition probabilities $p(x,y) = \mathcal{K}(x,y)/\|\mathcal{K}(x,\cdot)\|_1$ and define an $N \times N$ diffusion matrix $\mathbf{P}_{ij} = p(x_i, x_j)$ that describes a Markovian diffusion  over the the data.

\subsection{PHATE}
There are several dimensionality reduction methods that render data into 2-D visuals, such as PCA, tSNE~\cite{van_der_maaten_visualizing_2008}, and UMAP~\cite{mcinnes_umap_2018}. However, these methods fail to preserve the global manifold structure of the data and are not robust to noise. PCA cannot denoise in non-linear dimensions, and tSNE/UMAP effectively only constrain for near neighbor preservation---losing global structure. This motivated the development of a method of dimensionality reduction that retains manifold structure and denoises data \cite{moon_visualizing_2019}. 
 
PHATE also builds upon the diffusion-based manifold learning framework from \citet{coifman_diffusion_2006}, and involves the creation of a diffused Markov transition matrix from  data,  $\mathbf{P}$. PHATE collects all of the information in the diffusion operator into two dimensions such that global and local distances are retained. To achieve this, PHATE considers the $i$th row of $\mathbf{P}^t$ as the representation of  the $i$th datapoint in terms of its $t$-step diffusion probabilities to {\em all} other datapoints. PHATE then preserves a novel distance between two datapoints,  based on this representation called {\em potential distance (pdist)}. Potential distance is an $M$-divergence between the distribution in row $i$, $\mathbf{P}^t_{i,.}$ and the distribution in row $j$, $\mathbf{P}^t_{j,.}$. These are indeed distributions as $\mathbf{P}^t$ is Markovian: 

\vspace{-3mm}

\begin{equation}
pdist(i,j)=\sqrt{\sum_k (\log(P^t(i,k))-\log(P^t(j,k))^2}
\end{equation}

The $\log$ scaling inherent in potential distance effectively acts as a damping factor which makes faraway points similarly equal to nearby points in terms of diffusion probability. This gives PHATE the ability to maintain global context. The paper also allows for other types of symmetric divergences such as the JS divergence, which we use in our work to train neural networks.

These potential distances are embedded with metric MDS as a final step to derive a data visualization. \citet{moon_visualizing_2019} have shown that PHATE outperforms tSNE \cite{van_der_maaten_visualizing_2008}, UMAP \cite{mcinnes_umap_2018}, force-directed layout, and 12 other methods on the preservation of manifold affinity, and adjusted rand index on clustered datasets, in a total  of 1200 comparisons on synthetic and real datasets.

\subsection{Information Geometry and Fisher Information}

Information geometry \cite{amari2016information,nielsen2020elementary,arwini2008information,li2022prior,lin2021wasserstein} combines statistics and differential geometry to study the geometric structure of statistical manifolds. A statistical manifold is a Riemannian manifold $(M,g)$ where every point in the space $p \in M$ is a probability distribution. 
The Fisher Information Metric (FIM) is the standard Riemannian metric on statistical manifolds and measures the distinguishability between points on the manifold (probability distributions).

In the Riemannian setting one endows a smooth manifold $M^{n}$ with geometry by defining at each point $p \in M^n$ an inner product $g_p(\cdot, \cdot): T_{p}M \times T_{p}M \to \mathbb{R}$, where $T_{p}M$ represents the tangent space of $M$ at $p$. The collection of inner products $g$  defines a Riemannian metric on the manifold $M$.

On a statistical manifold $M^n$ parameterized by the coordinates $\theta=(\theta_1, \ldots, \theta_n)$ we have that points are distributions $p(x,\theta) \in M$ over some common probability space (or data set) X. 
A Riemannian metric one can compute on a statistical manifold is the Fisher Information Metric (FIM): 
\begin{equation}
\label{eqn1}
I_{ij}(\theta) = \int_{X}\frac{\partial \log p(x,\theta)}{\partial\theta_{i}}\frac{\partial \log p(x,\theta)}{\partial\theta_{j}}p(x,\theta)dx
\end{equation}
One can use this metric to compute geometric quantities on the statistical manifold such as length or volume. Generically, for a parameterized curve on a Riemannian manifold $c \colon [a,b] \to (M,g)$ its length is given by $$L(c)= \int_a^b |\dot{c}(t)| ~dt = \int_a^b \sqrt{ g_{c(t)} (\dot{c}(t), \dot{c}(t) ) } ~dt.$$ Volume of the manifold is given by $$V(M) = \int_M \sqrt{|det(g)|} ~d\theta.$$ These geometric quantities can be used to provide insight into the original data space $X$. We note that access to a Riemannian metric theoretically provides access to its associated Riemann curvature tensor. However the Riemann curvature tensor is defined in via the metric's unique Levi-Civita connection, an object that we do not explore here.

{\bf Example}: We consider a family of distributions $\gF_\Theta$ parameterized by a parameter space $\Theta$. For the Gaussian family the parameter space is $\Theta = \{(\mu,\sigma): \mu\in\mathbb{R}, \, \sigma \in \mathbb{R}^+\}$, and the family is defined by $\gF_\Theta = \{\gN(\mu,\sigma): (\mu,\sigma)\in\Theta\}$. This parameterization turns the Gaussian family into a 2-dimensional statistical manifold, with FIM $$I(\mu,\sigma)=\begin{bmatrix}
1/\sigma^2 & 0\\
0 & 2/\sigma^2
\end{bmatrix}$$

We see that the Gaussian family admits a hyperbolic geometry, where the distance between distributions with fixed differences in means increases as their variance decreases. This example illuminates the main interpretation of the FIM and its associated (Fisher-Rao) distance:~the more distinguishable are two distributions (say in terms of inference) the larger their Fisher-Rao distance. The FIM locates Gaussians with small variance at greater distances in the statistical manifold.

The FIM has a connection to other information theoretic quantities, in particular to Kullback-Liebler (KL) and Jensen-Shannon (JS) divergences, as shown in \citet{crooks2007measuring}. The FIM is an infinitesimal version of the KL-divergence.  Key results are included below for reference. 

\begin{theorem}{\citep[from][]{crooks2007measuring}} \label{thm:JS} The infinitesimal Jensen-Shannon divergence, $dJS=JS(p, p+dp)=\frac{1}{8} \sum_i \frac{(dp_i)^2}{p_i}$ is equal to the FIM, $ \frac{d c^i}{dt} I_{ij}(c) \frac{d c^j}{dt} = \sum_x \frac{1}{p(x)} [\frac{dp(x)}{dt}]^2$. 
\end{theorem}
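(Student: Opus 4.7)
The plan is to split the theorem into two independent pieces that recombine to give the stated equality. Claim (A) is the purely information-theoretic Taylor expansion: $JS(p, p+dp) = \tfrac{1}{8}\sum_i (dp_i)^2/p_i$ to second order in a perturbation $dp$ satisfying $\sum_i dp_i = 0$. Claim (B) is a chain-rule identity: for a smooth family $p(x,\theta)$ and a curve $c(t)$ in parameter space, $\sum_x p(x)^{-1}[dp(x)/dt]^2 = \dot{c}^i I_{ij}(c) \dot{c}^j$. Setting $p = p(\cdot, c(t))$ and $p+dp = p(\cdot, c(t+dt))$ and composing (A) with (B) then yields the full statement, with the $1/8$ absorbed into $dJS \approx \tfrac{dt^2}{8}\dot{c}^i I_{ij}\dot{c}^j$.

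For Claim (A), I would write $JS(p,q) = \tfrac{1}{2}KL(p,m) + \tfrac{1}{2}KL(q,m)$ with $m = (p+q)/2$, set $q = p + \epsilon v$ with $\sum_i v_i = 0$, and appeal to the standard second-order expansion $KL(a,b) = \tfrac{1}{2}\sum_i (a_i - b_i)^2/a_i + O(\|a-b\|^3)$ for distributions $b$ close to $a$. This expansion itself comes from $\log(1+x) = x - x^2/2 + O(x^3)$ together with the constraint $\sum_i (a_i - b_i) = 0$ that kills the first-order term. Applying it to $KL(p,m)$ (where $p_i - m_i = -\epsilon v_i/2$) and $KL(q,m)$ (where $q_i - m_i = \epsilon v_i/2$, and $q_i = p_i + O(\epsilon)$ in the denominator), both terms collapse to $\tfrac{\epsilon^2}{8}\sum_i v_i^2/p_i + O(\epsilon^3)$, and averaging gives the claimed formula.

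For Claim (B), the chain rule gives $dp(x)/dt = \sum_i (\partial p/\partial \theta_i)\,\dot{c}^i$; squaring, dividing by $p(x)$, and interchanging the finite sum over indices with the sum (or integral) over $x$ yields $\sum_{i,j} \dot{c}^i \dot{c}^j \sum_x p(x) (\partial_i \log p)(\partial_j \log p)$, using the identity $(\partial_i p)/p = \partial_i \log p$. The inner sum is precisely $I_{ij}(c)$ by the definition in (\ref{eqn1}), so this step is a direct calculation with no subtlety.

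The main obstacle is purely bookkeeping in Claim (A): one needs both $KL$ terms to order $\epsilon^2$, and for $KL(q,m)$ the prefactor $q_i = p_i + \epsilon v_i$ multiplies a logarithm whose linear part is itself $O(\epsilon)$, so an $\epsilon^2$ cross-term appears that must cancel against the quadratic piece of the log expansion. The cleanest way to sidestep this is to invoke the $\chi^2$-style identity above as a black box; otherwise one has to track the cancellation by hand, ultimately relying on $\sum_i v_i = 0$ to eliminate the surviving linear contribution. No curvature or connection machinery is needed, consistent with the remark in the text that the FIM is simply the infinitesimal version of a divergence.
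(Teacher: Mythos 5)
Your proof is correct. The paper does not actually prove this theorem---it imports it from \citet{crooks2007measuring} and only sketches the analogous second-order KL expansion in Appendix A.2---and your two-step argument (second-order Taylor expansion of $JS(p,p+dp)$ using $\sum_i dp_i = 0$ to kill the linear term, followed by the chain rule to rewrite $\sum_x p(x)^{-1}[dp(x)/dt]^2$ as $\dot c^i I_{ij}(c)\dot c^j$) is precisely the standard derivation that the citation stands in for, with the $1/8$ bookkeeping in the $KL(q\|m)$ term handled correctly.
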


One extends this infinitesimal result to more global objects by integrating over parameterized paths. 

\begin{corollary}{\citep[from][]{crooks2007measuring}}
The length (with respect to the FIM) of a parameterized path $c(t)$ equals the total Jensen-Shannon divergence over the curve: 
\begin{equation}\label{eq:JS}
\int_a^b \sqrt{ \frac{\partial c^i}{\partial t} I_{i,j} \frac{\partial c^j}{\partial t}}~dt = \sqrt{8} \int_a^b d\sqrt{JS} 
\end{equation}
where $d\sqrt{JS}$ is the infinitesimal change in the Jensen–Shannon divergence along $c(t)$. 
\end{corollary}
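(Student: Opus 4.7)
The plan is to derive the identity as a direct integration of the infinitesimal relation supplied by Theorem~\ref{thm:JS}. The theorem states that the infinitesimal Jensen--Shannon divergence along a parameterized path $c(t)$ on the statistical manifold matches the FIM quadratic form, up to the factor $1/8$. Concretely, if we write $dp_i = \tfrac{dp_i}{dt}\,dt$, then
\[
dJS \;=\; \tfrac{1}{8}\sum_i \frac{(dp_i)^2}{p_i} \;=\; \tfrac{1}{8}\sum_x \frac{1}{p(x)}\!\left[\frac{dp(x)}{dt}\right]^{\!2} dt^2 \;=\; \tfrac{1}{8}\,\frac{\partial c^i}{\partial t} I_{ij}(c) \frac{\partial c^j}{\partial t}\, dt^2 .
\]
This is the key bridge between the two sides of the claimed equality: it expresses the infinitesimal JS divergence as the squared Riemannian speed of $c$, rescaled by $1/8$.

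Next I would take the square root of this infinitesimal identity, observing that the speed $\|\dot c(t)\|_g = \sqrt{\frac{\partial c^i}{\partial t} I_{ij}(c) \frac{\partial c^j}{\partial t}}$ is non-negative, so one obtains the infinitesimal differential form relation
\[
d\sqrt{JS} \;=\; \frac{1}{\sqrt{8}}\,\sqrt{\frac{\partial c^i}{\partial t} I_{ij}(c) \frac{\partial c^j}{\partial t}}\; dt .
\]
Integrating from $a$ to $b$ along $c(t)$ and multiplying by $\sqrt{8}$ yields exactly the right-hand side of \eqref{eq:JS}, while the left-hand side is by definition the length $L(c)$ with respect to the FIM. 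This completes the corollary.

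The main conceptual subtlety, and the step I would handle most carefully, is the precise meaning of the symbol $d\sqrt{JS}$ on the right-hand side. It is not the exterior derivative of a globally defined function but rather the positive square root of the infinitesimal divergence $dJS$ along the curve; accordingly, the transition from the quadratic identity in Theorem~\ref{thm:JS} (which is invariant under reparameterization up to $dt^2$) to its square root requires that $dt > 0$ along the path and that $\|\dot c(t)\|_g \geq 0$, both of which hold for any regular parameterization of $c$. Once this interpretation is fixed, the corollary follows from pointwise application of the theorem and the additivity of the integral, with no further analytic input needed.
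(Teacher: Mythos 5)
Your proposal is correct and follows the same route as the source: the paper states this corollary without proof (importing it from Crooks), and the intended derivation is exactly what you give --- substitute $dp_i = \tfrac{dp_i}{dt}\,dt$ into the infinitesimal identity of Theorem~\ref{thm:JS} to get $dJS = \tfrac{1}{8}\,\dot c^{\,i} I_{ij}(c)\,\dot c^{\,j}\,dt^2$, take the positive square root, and integrate over $[a,b]$. Your remark on the interpretation of $d\sqrt{JS}$ as $\sqrt{dJS}$ along the curve (rather than an exact differential of a global function) is the right point to flag and is handled correctly.
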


\begin{figure*}[!ht]
    \centering
    \includegraphics[width=14cm]{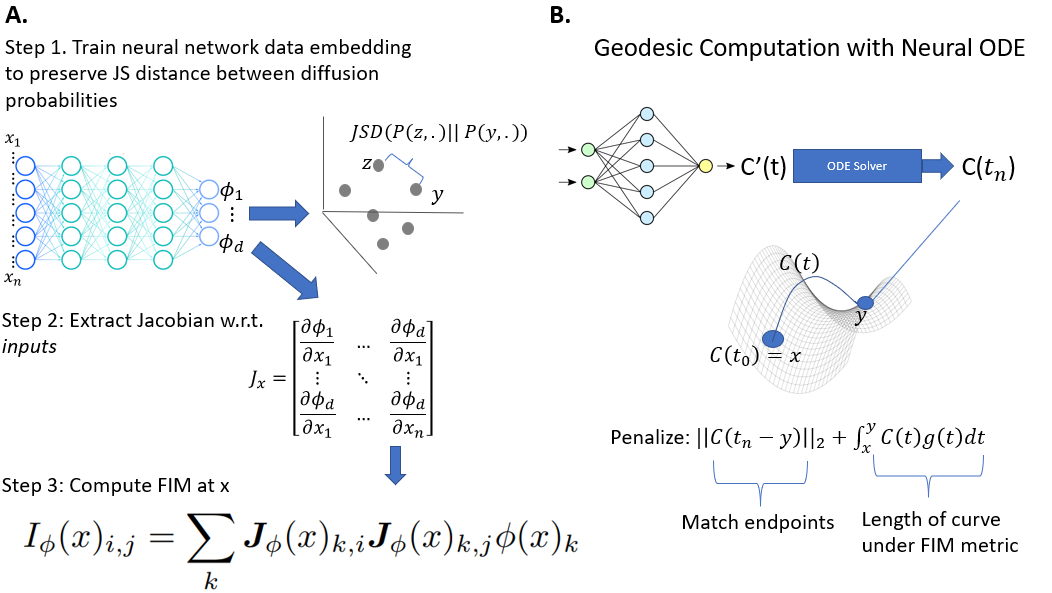}
    \caption{Schematic of neural-FIM which is used to generate a continuous FIM embedding (A) and schematic of neural-ODE used to find geodesics with FIM (B).}
    \label{fig:FIM_schematic}
\end{figure*}

\section{Methods}

\subsection{Neural FIM}

To approximate a continuous FIM from a finite set of distributions, we approximate the family of distributions via a neural network, whose Jacobian we use to evaluate the FIM in a continuous manner. The basic framework, shown in Figure \ref{fig:FIM_schematic}, consists of two parts.
The first part (Figure \ref{fig:FIM_schematic}A) is a neural network trained to match Jensen-Shannon Distances (explained below) from which a Jacobian is extracted for FIM computation. The second part (Figure \ref{fig:FIM_schematic}B) is a neural ODE network that computes geodesic paths, i.e., shortest length paths on data manifolds. 

We consider the dataset to be a point cloud $X\in\gX$ of size $n$ from a sigma finite distribution $q$. The first step is to translate such a point cloud into a family of distributions. To do so, we construct an affinity graph from the point cloud, and its diffusion operator $\mP_n^t$. Each row of the diffusion operator is a distribution (probability mass function) that describes the transition probabilities of a random walk on the data; it thus defines a map $x \mapsto \mP_n(x,\cdot)$, where $\mP_n(x,\cdot)$ is the row corresponding to the observation $x$. The construction of this map is summarized in Algorithm~\ref{alg:phate_fim} (see Appendix). Further, we assume that $x\mapsto \mP_n(x,\cdot)$ is differentiable.

We first consider training a neural network $\phi:\mathbb{R}^d\to \gP(Z)$, where $\gP(Z)$ is the space of probability mass functions on a latent space $Z\subset\mathbb{R}^n$, minimizing the loss function 
\begin{align}
\label{eq:vanillaloss}
L(\phi) &= \E_{x \sim \pdata} \| \phi(x) - \mP_n(x,\cdot)  \|_2\\
&+ \| \nabla_x\phi(x) - \nabla_x \mP_n(x,\cdot)\|_2\notag
\end{align}

This renders the data set into a statistical manifold consisting of $n$ points with each point defining a probability distribution 
in $\mathbb{R}^n$ where $n$ is the dimensionality of the last layer. We can then utilize the Jacobian of this embedding with respect to the inputs to obtain the partial derivatives required for the computation of an FIM at any $x\in\mathbb{R}^d$ via 
\begin{equation}\label{eq:FIM}
    I_\phi(x)_{i,j} = \sum_k  \mJ_{\phi}(x)_{k,i} \mJ_{\phi}(x)_{k,j} \phi(x)_k
\end{equation}
 where $\mJ_\phi$ is the Jacobian matrix of $\phi$ with respect to the input variables. Notably, $\mJ_\phi$ is \emph{not} the Jacobian used for the training of the neural network, i.e., the Jacobian with respect to parameters such as weights and biases of the neural network. However, this training method requires that the dimensionality of the last layer would be very high, and that we train to match data derivatives which hard to obtain. Thus we do not use this loss in  practice. 

We instead offer a much more efficient alternative: we reduce the data to an arbitrarily low $m$ dimensional latent space $Z$ by training the neural network to match the Jensen-Shannon divergence between rows of the distribution, which would be similar to PHATE \cite{moon_visualizing_2019} distance using this alternative divergence:
\begin{align*}
\scriptsize
&JS(\mP_n^t(i,\cdot),\mP_n^t(j,\cdot)):= \\ & \tfrac{1}{2} KL((\mP_n^t(i,\cdot)||M)+KL((\mP_n^t(j,\cdot)||M),
\end{align*}
where $M := (1/2) (\mP_n^t(i,\cdot)+\mP_n^t(j,\cdot))$.
 
Since we match only distances here, there is no restriction on the dimensionality of the output space $Z$. We achieve this by using this alternative loss function: 
\begin{align}
\label{eq:JSLoss}
L_{JS}(\phi) &:=  \E_{x \sim \pdata} \Big\|  \sqrt{JS(\phi(x), \phi(y))} \nonumber \\ 
&- \sqrt{JS(p(x), p(y))} \Big\|_2^2 
\end{align}

Below, we show that in either case our neural network Jacobians can be used to compute the FIM of datapoints within the manifold of the data. 

\begin{proposition}
Assume for any $x\in X$, we have uniform convergence of $\lim_{n\to\infty}\mP_n(x,\cdot) = P(x,\cdot)$ where $\mP_n$ and $P$ are Markov operators with compact support. If $L(\phi_n) \to 0$ uniformly, then we have $\lim_{n\to\infty}\nabla_x \phi_n(x) = \nabla_x  P(x,\cdot)$. 
\end{proposition}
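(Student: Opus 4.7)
The plan is to peel apart the loss, apply the triangle inequality twice, and invoke the classical real-analysis theorem that uniform convergence of $C^1$ functions together with uniform convergence of their derivatives yields a $C^1$ limit with the expected derivative. First, I would observe that the loss in \eqref{eq:vanillaloss} is a sum of two non-negative terms under expectation, so uniform convergence $L(\phi_n) \to 0$ forces both the value term $\|\phi_n(x) - \mP_n(x,\cdot)\|_2$ and the gradient term $\|\nabla_x \phi_n(x) - \nabla_x \mP_n(x,\cdot)\|_2$ to vanish uniformly in $x$ on the compact support. This gives immediately that $\nabla_x \phi_n(x) - \nabla_x \mP_n(x,\cdot) \to 0$ uniformly, which is half of what I want.

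Next I would combine the hypothesis $\mP_n(x,\cdot) \to P(x,\cdot)$ uniformly with $\phi_n(x) - \mP_n(x,\cdot) \to 0$ uniformly (via the triangle inequality) to conclude $\phi_n(x) \to P(x,\cdot)$ uniformly on the compact support. Since each $\mP_n$ is assumed differentiable, each $\phi_n$ is $C^1$ by construction, and the supports are compact, I can invoke the standard theorem: if $\phi_n \to P$ uniformly and $\nabla_x \phi_n$ converges uniformly to some limit $g$, then $g = \nabla_x P$. It remains to show that $\nabla_x \phi_n$ actually converges, and to identify its limit; this is delivered by the chain $\nabla_x \phi_n \approx \nabla_x \mP_n \to \nabla_x P$, after which a final triangle inequality $\|\nabla_x \phi_n - \nabla_x P\|_2 \leq \|\nabla_x \phi_n - \nabla_x \mP_n\|_2 + \|\nabla_x \mP_n - \nabla_x P\|_2$ closes the argument.

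The main obstacle is precisely the link $\nabla_x \mP_n \to \nabla_x P$, because uniform convergence of functions does \emph{not} in general imply convergence of derivatives. The compact support assumption is essential here, and I expect to need one of two routes: either (a) show that the family $\{\nabla_x \mP_n\}$ is uniformly bounded and equicontinuous (e.g., from a Lipschitz bound on the kernel construction in Section~\ref{sec: data_diffusion}), apply Arzel\`a--Ascoli to extract a uniformly convergent subsequence, and then identify the limit as $\nabla_x P$ by testing against smooth compactly supported functions and integrating by parts; or (b) strengthen the stated hypothesis to uniform $C^1$ convergence of $\mP_n \to P$, at which point the conclusion is essentially immediate from the triangle inequality. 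Either way, the technical heart of the proof lies in transferring regularity across the $n \to \infty$ limit, not in the neural-network side of the argument, which contributes only the two vanishing residuals from $L(\phi_n) \to 0$.
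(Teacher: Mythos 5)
Your proposal follows essentially the same route as the paper: split the loss into its two vanishing residuals, chain triangle inequalities through $\mP_n$, and invoke the classical theorem that uniform convergence of a $C^1$ sequence together with uniform convergence of its gradients permits interchanging the limit and the derivative. The ``main obstacle'' you flag --- that the stated hypothesis gives uniform convergence of $\mP_n$ but says nothing about convergence of $\nabla_x \mP_n$, so an equicontinuity/Arzel\`a--Ascoli argument or a strengthened $C^1$-convergence hypothesis is genuinely required --- is real, and it is precisely the step the paper's own proof passes over with the unjustified assertion that ``the convergence of derivatives is also uniform.''
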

\begin{proof}

Since $\mP_n$ is continuous and has compact support, by the universal approximation theorem \citep{Cybenko1989}, $\mP_n$ can be approximated by a feed forward neural network with a finite number of neurons. By the definition of loss function in Equation~\ref{eq:vanillaloss}, for a fixed $n$, converging uniformly to 0 implies $\phi_n(x)$ converging uniformly to $\mP_n(x, \cdot)$.
Because the convergence of derivatives is also uniform, we can interchange the limit and the derivative, obtaining
\begin{align*}
    \lim_{n\to\infty}\nabla_x \phi_n(x)_j &=  \nabla_x \lim_{n\to\infty} \phi_n(x)_j \\
    &= \nabla_x \lim_{n\to\infty}  \mP_n(x,x_j)\\
    &= \nabla_x  P(x,x_j). 
\end{align*}%
\vspace{-1mm}
 \end{proof}
 
The above proposition shows that a neural network trained to match diffusion probabilities as in Equation \ref{eq:vanillaloss} will have its derivatives as described in in Equation \ref{eq:FIM}. Thus, the neural network can be used to compute the partial derivatives needed to compute the FIM. 
Directly enforcing the loss is simple, however it requires the network output size to scale linearly with the dataset. To allow for FIM to be continuous, we require that our neural-FIM embeddings to be continuously differentiable.

 If instead of using the loss function from Equation \ref{eq:vanillaloss} we use the alternative loss function from Equation \ref{eq:JSLoss}, we still converge to the FIM as below. 

 \begin{proposition} 
Assume that $p$ has compact support. As $|X| \to \infty$, if $L_{JS}(\phi)$ converges to 0, then $I_\phi(x) = I_p(x)$ for all $x \in X$.
\end{proposition}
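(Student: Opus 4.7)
The plan is to reduce the statement to a limiting, infinitesimal identity and then use Theorem~\ref{thm:JS} to convert matching of Jensen-Shannon distances into matching of Fisher information matrices. First I would use the hypothesis $L_{JS}(\phi) \to 0$, together with the compactness of $\text{supp}(p)$, to conclude that for $p_{\text{data}}$-a.e.\ pair $(x,y)$ in the support we have
\begin{equation*}
\sqrt{JS(\phi(x),\phi(y))} \;=\; \sqrt{JS(p(x,\cdot),p(y,\cdot))}.
\end{equation*}
The regime $|X|\to\infty$ then lets me pick, for any fixed $x$ in the support and any tangent direction $v\in\R^d$, a sequence $y_n\to x$ with $(y_n-x)/\|y_n-x\|\to v/\|v\|$ for which the equality above holds.

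Next I would take the infinitesimal limit of this equality. Writing $t_n := \|y_n - x\|$, a Taylor expansion of $p(y,\cdot)$ around $x$ and an application of Theorem~\ref{thm:JS} give
\begin{equation*}
JS(p(x,\cdot), p(y_n,\cdot)) \;=\; \tfrac{t_n^2}{8}\, v^\top I_p(x) v \;+\; o(t_n^2),
\end{equation*}
and the analogous expansion for $\phi$ yields $\tfrac{t_n^2}{8}\, v^\top I_\phi(x) v + o(t_n^2)$. Dividing the identity $JS(\phi(x),\phi(y_n)) = JS(p(x,\cdot),p(y_n,\cdot))$ by $t_n^2$ and sending $n\to\infty$ gives $v^\top I_\phi(x) v = v^\top I_p(x) v$. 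Since $v$ was arbitrary and both matrices are symmetric, polarization yields $I_\phi(x) = I_p(x)$ for every $x$ in the support, which is exactly the claim. One may equivalently phrase this through the length corollary (Equation~\ref{eq:JS}): matching $\sqrt{JS}$ for all pairs forces the induced path lengths to agree, which on the infinitesimal level forces the quadratic forms to agree.

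The main obstacle, and the step I would spend the most care on, is the interchange between the limits $|X|\to\infty$, $L_{JS}\to 0$, and $y\to x$. A clean argument needs the convergence of the loss to be strong enough (e.g.\ uniform on the compact support, which follows from the universal approximation setup used in the previous proposition) so that the pointwise identity $\sqrt{JS(\phi(x),\phi(y))} = \sqrt{JS(p(x,\cdot),p(y,\cdot))}$ holds on a dense subset and that the $o(t_n^2)$ remainders are uniformly controlled. One also needs $\phi$ to be smooth enough on $\text{supp}(p)$ for its JS expansion to reach the quadratic order; this is reasonable under the continuity assumptions carried through the paper, but should be stated explicitly. Modulo these regularity hypotheses, the argument is a direct application of Theorem~\ref{thm:JS} to both sides of the matched distance identity.
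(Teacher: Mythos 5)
Your argument is essentially the same as the paper's: both reduce the hypothesis $L_{JS}(\phi)\to 0$ to an equality of Jensen--Shannon divergences between infinitesimally close points and then apply Theorem~\ref{thm:JS} to both sides to identify the quadratic forms $v^\top I_\phi(x)v$ and $v^\top I_p(x)v$. Your write-up is in fact more careful than the paper's --- you make explicit the choice of approach direction $v$, the polarization step needed to pass from equality of quadratic forms to equality of the symmetric matrices, and the limit-interchange/regularity issues that the paper's one-line ``which implies $I_\phi(x)=I_p(x)$'' glosses over.
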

\begin{proof}
Since $L_{JS}$ converges to 0, for an infinitesimal $\Delta x$,
\begin{align*}
JS(\phi(x), \phi(x + \Delta x)) = JS(p(x), p(x + \Delta x)).
\end{align*}
For any $C^1$ path $c$ between $x$ and $x + \Delta x$, we can apply Theorem~\ref{thm:JS} twice, yielding
\begin{align*}
8  \frac{d c}{dt} I_\phi(c) \frac{d c}{dt} &=JS(\phi(x), \phi(x + \Delta x))
\\
&= JS(p(x), p(x + \Delta x)) \\
&= 8  \frac{d c}{dt} I_p(c) \frac{d c}{dt},
\end{align*}
which implies $I_\phi(x)= I_p(x)$.
\end{proof}

Fixing the aforementioned embeddings to embeddings generated using PHATE, we exploit the fact that
\begin{align}\|\mathrm{PHATE}_{JSD}(x) - \mathrm{PHATE}_{JSD}(y)\|_2 \\
= JSD(P(x, \cdot),P(y, \cdot)),\notag
\end{align}
meaning we can use the loss
\begin{align}
L(\phi) = \E_{x \sim \pdata} \Big\|  \sqrt{JS(\phi(x), \phi(y))}\\ -\|\text{PHATE}_{JSD}(x) - \text{PHATE}_{JSD}(y)\|_2 \Big\|_2^2\notag,
\end{align}
which motivates the use of PHATE with a Jensen-Shannon divergence MDS step. Overall, this analysis also connects the dimensionality reduction method PHATE with neural FIM in that the former is essentially a discrete version of the latter.

\subsection{Geodesic optimization with Neural ODEs}

Using the Neural FIM we can compute various Riemannian quantities to describe and understand our dataset. The most important quantity is the geodesic (manifold-intrinsic) distance between datapoints using the FIM. For the FIM the length of the geodesic is also known as the Fisher-Rao distance. In order to compute this we use a neural ODE that optimizes over all paths between datapoints in order to minimize path length.

Given a Riemannian manifold $(\mathcal{M},g)$ the length of a $C^1$-curve $\gamma \colon [a,b]\to \mathcal{M}$ is 
$ L(\gamma)=\int_{a}^{b}\sqrt{ (\frac{d \gamma}{dt})^T\cdot g_{\gamma(t)}\cdot\frac{d \gamma}{dt} } ~ dt.$
For two distributions $p_{\theta_1}$ and $p_{\theta_2}$, a path from $p_{\theta_1}$ to $p_{\theta_2}$ can be obtained by parameterizing a function $\frac{d\gamma}{dt}=f_\theta(t,\gamma)$  in parameter space so that 
$$\hat{\theta}_2=\gamma(b)=\theta_1+\int_{a}^{b}f_\theta(t,\gamma(t))~dt$$
Among all the paths parameterized by $f_\theta$, penalizing the length of the curve and the prediction loss $||\hat{\theta}_2-\theta_2||_2$ gives the geodesic path, i.e., 
\begin{equation}
    \label{eq:ode_loss}
  \argmin_\theta ~ \lambda \|\hat{\theta}_2 - \theta_2\|_2^2 
    + \int_a^b \sqrt{ f_\theta ^T \cdot g_{\gamma(t)} \cdot f_\theta } ~ dt.
\end{equation}

Thus this network queries the neural FIM network in order to optimize path length based on the FIM. 

\section{Empirical Results}

In this section, we provide empirical results of our method. First, we provide a practical use case of the FIM, in selecting parameters for PHATE \cite{moon_visualizing_2019}. Specifically, PHATE and other diffusion-based methods use two prominent parameters, one that describes the bandwidth of the Gaussian (or related) kernel and another that describes the time of diffusion. We show how the FIM can be used to explore the parameter space by rendering this as a statistical manifold. Second, we apply neural FIM to single cell and toy datasets to showcase information about the data revealed by neural FIM including local volume, trace and eigenspectrum of the metric tensor at individual datapoints. Finally we demonstrate applications of our neural ODE by computing geodesics, which involves optimizing over path lengths of curves computed by the FIM. 

\subsection{Parameter selection for diffusion potentials with FIM}
\label{sec:PHATEparam}

As described in Sec.~\ref{sec: data_diffusion}, data diffusion is a powerful framework for exploring the manifold-intrinsic structure of a dataset based on exploring the data through an a Gaussian affinity matrix $K$ which is then normalized Markovian random walk process $P$ and powered to a diffusion time scale $t$ to mimic different steps of random walks. In PHATE, these parameters have significant effects on the visualization (See Figure \ref{fig:Volume_PHATE_parameters}).  Here we create a 2-dimensional statistical manifold consisting of parameters $\sigma$ corresponding to the bandwidth of the Gaussian Kernel, and the diffusion time scale, $t$. Though we only discuss the diffusion potential matrix rendered by PHATE, the reader should note that using the lens of the FIM for various transformations of point cloud data to notions of similarity or distance could be of interest to practitioners. 

We consider the FIM described in \Eqref{eq:FIM} with respect to the bandwidth and diffusion time scale parameters, $I_\Psi(\sigma,t)$, where $\Psi: \mathbb{R}^{N} \to \mathbb{R}^{N \times N}$ is used to construct the diffusion potential matrix. To achieve this, we first generate point cloud data using the tree dataset available in the PHATE package. We then subsampled points randomly and generate the diffusion potential matrix $\mathbf{P}^{t}_{ij} $ using techniques from Sec.~\ref{sec: data_diffusion}. To understand how $\theta = (t,\sigma)$ affect the construction of the diffusion potential matrix, we compute the volume of the $[2 \times 2]$ FIM using $V(M) = \int_M \sqrt{|det(I_\Psi(\sigma,t))|} ~d\theta$ for each combination of $\theta_k = (t_k,\sigma_m)$ for a finite range of values $t=[1,15]$, $m=[50,150]$. In Figure \ref{fig:Volume_PHATE_parameters}A, we show four different PHATE embeddings of the data corresponding to different parameter selections. We generally see that more details of branches are available in the embedding at lower values of $t$, and that higher values of bandwidth in this range retain more of the geometric structure. Thus, differences in these parameters have marked effects on the embedding. 
\vspace{-2mm}
\begin{figure}[H]
    \centering
    \includegraphics[width=1\columnwidth]{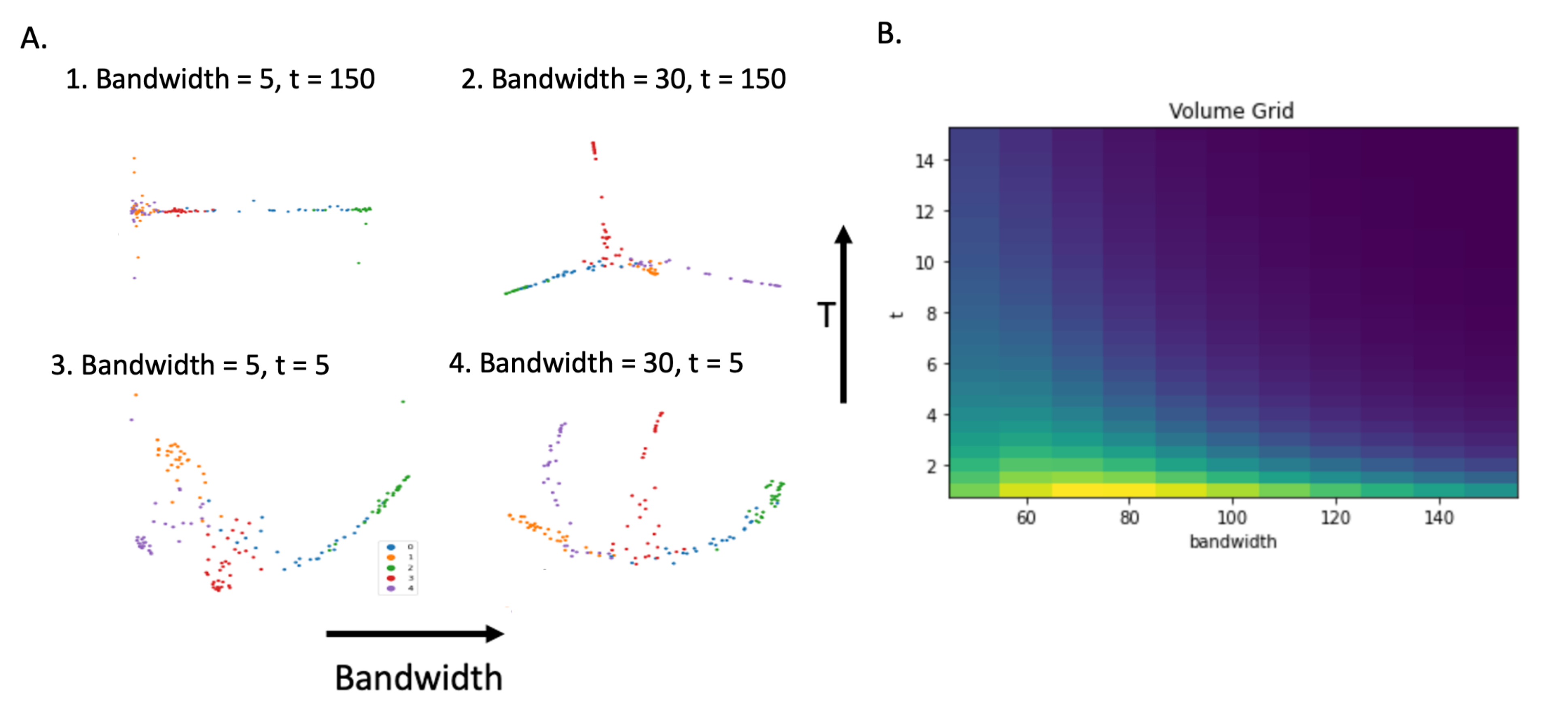}
    \caption{A. PHATE embeddings of the same artificial tree data with different bandwidth $\sigma$ and $t$ parameters. B. Volume of FIM with respect to $t$ and  $\sigma$.}
    \label{fig:Volume_PHATE_parameters}
    \vspace{-2mm}
\end{figure}

In Figure \ref{fig:Volume_PHATE_parameters}B we show a heat map of the FIM volume for different parameters of $\theta_k = (t_k,\sigma_k)$. From inspection, one can see certain combinations of $t$ and $\sigma$ yield the highest volume. In particular, brighter regions of the volume grid reveal the combinations of $t$ and $\sigma$ that undergo the most change from the point cloud space to the diffusion potential space while the converse is true for darker regions. This aligns with our intuition about how $t$ and $\sigma$ affect the diffusion potential construction of the graph: since the diffusion time scale and the bandwidth both incorporate trade-offs between local and global structure depending on their magnitude, we expect there to be a finite range in the parameter space where this trade-off is optimal.  In this case, we discover that this range is between 60 and 90 for $\sigma$ and between $0$ and $4$ for $t$. We also notice that the influence of the bandwidth depends on $t$; for smaller, $t$ a change of the bandwidth results in a larger change on the diffusion probabilities. 
 
\subsection{Neural FIM Embeddings}

Here, we deploy neural FIM on three datasets: 1) a toy tree dataset generated similarly to the one above, 2) a single cell mass cytometry dataset of induced pluripotent stem cell (IPSC) reprogramming \cite{zunder2015continuous} containing 220450 cells and 33 features, 3) a single cell RNA-sequencing dataset measuring peripheral blood mononucleocyte cells (i.e., immune cells) from a healthy donor (publicly available on the 10x website)~\cite{PBMCs} containing 2638 cells and 1838 features. For each dataset, we compute the FIM $g$ for each dataset, and we explore the point-wise trace $tr(g)$ and volume $\sqrt{|det(g)|}$ to understand manifold-intrinsic structure and geometric properties of our datasets. The embeddings for each dataset are generated by applying PHATE with a JSD between rows of the diffusion potential matrix in the ambient space (see \ref{fig:EmbeddingComp}. for comparison between PHATE and PHATE-JSD). 

 To obtain the continuous FIM for each dataset, we first compute the diffusion operator matrix $\mathbf{P}(x,\cdot) \in R^{n \times n}$ for each batch of point cloud data. We then embed each point in the batch $x_i \in R^{d}$ using the encoding network, neural-FIM $\phi: R^{d} \to R^{m} $ where $d$ is the original dimension of the point cloud data and $m$ is the last dimension of the encoder. Next, train the neural FIM using the loss defined in Equation~\ref{eq:JSLoss}.
 
 We can then compute the Jacobian  $\mathbf{J}(x_i) \in R^{m \times d}$ for each point of the network output  $x_i \in R^m$  with respect to the input coordinates. An FIM $I_\phi(x_i) \in R^{n \times n}$ can then be computed (using Equation \ref{eq:FIM}) for any point input to neural-FIM, thus yielding a continuous FIM for the manifold. Crucially, this allows one to compute information-theoretic and geometric quantities such as divergences (infinitesimally), volume, length, and relatedly---geodesics.

\subsubsection{Toy Data}

The artificial tree dataset we use for this was randomly generated using a built-in function in the PHATE package \cite{moon_visualizing_2019} which allows one to generate random trees by specifying the number of branches and the number of dimensions.

 To validate the FIM computation, we color the embedding of the tree with the volume (Figure~\ref{fig:FIM_tree}A) and trace (Figure~\ref{fig:FIM_tree}B) which are now accessible with the continuous FIM. Intuitively, the magnitude of the volume and trace are high for regions of the data where there are several directions of progression available for datapoints corresponding to each of the branches. In such areas, the metric tensor has several high eigenvalues.  Conversely, the volume and trace will be low in regions of the manifold where there is a single direction of progression such as along individual branches. This relationship can be seen in Figures \ref{fig:FIM_tree}A and \ref{fig:FIM_tree}B---the region of the manifold where the branching occurs (in the center) contains the highest magnitude of volume and trace while the converse holds for sparse areas of the manifold. This variational coloring of trace and volume we observe empirically is a good sanity check that the neural-FIM network is computing what we expect and motivates us to move to real-world examples.

\begin{figure}[htb]
    \centering
    \includegraphics[width=1\columnwidth]{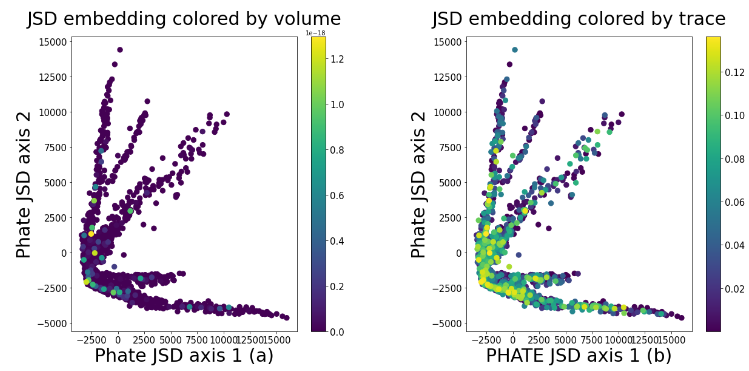}
    \caption{PHATE embedding of tree data colored by the volume (a) and trace (B) of the FIM. }
    \label{fig:FIM_tree}
    \vspace{-4mm}
\end{figure}

\subsubsection{Single Cell Data}

In Figures \ref{fig:FIM_pbmc}A and \ref{fig:FIM_pbmc}B the trace and the volume are colored on a 2D visualization of an embedding for peripheral blood mononucleic cell (PBMC) dataset. This dataset consists of three major classes of immune cells: T cells, B cells and Monocytes. In each cluster the center of the cluster has highest volume with boundaries having lower volume. Interestingly boundaries that are at the edges of the data (pointing away from other clusters) have even lower volume. Again, the volume seems to indicate potential choices for traveling along the manifold. Hence, the FIM can be used as a tool for illuminating regions of the manifold that retain the most informative components of the manifold.

\begin{figure}[htb]
    \centering
    \includegraphics[width=1\columnwidth]{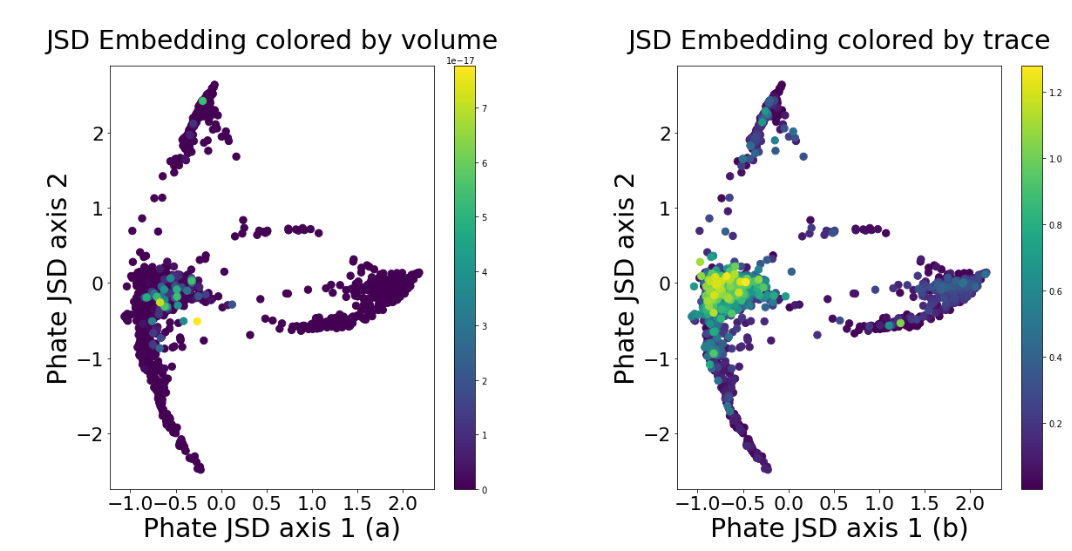}
    \caption{PHATE embedding of pbmc data colored by the volume (a) and trace (b) of the FIM.}
    \label{fig:FIM_pbmc}
    \vspace{-3mm}
\end{figure}

\vspace{-1mm}
\begin{figure}[htb]
    \centering
    \includegraphics[width=1\columnwidth]{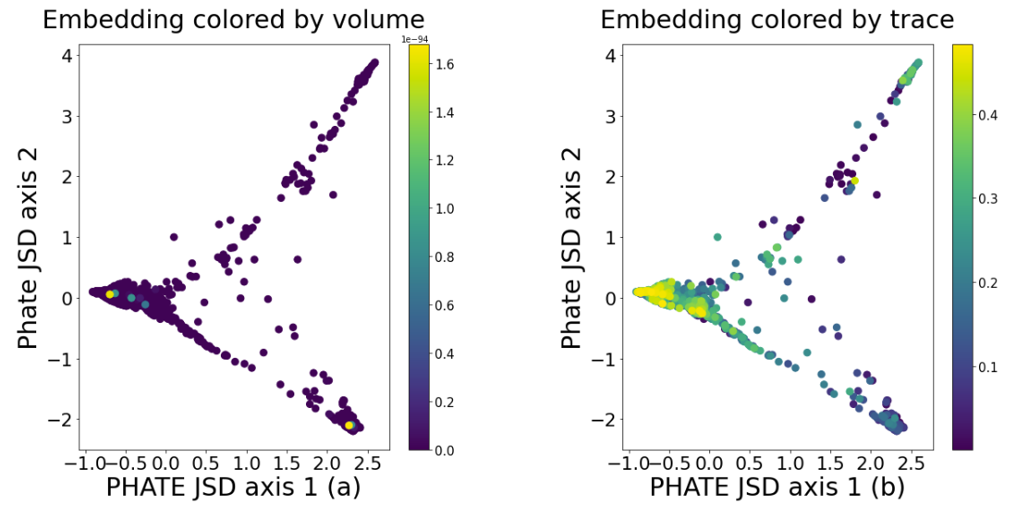}
    \caption{PHATE embedding of IPSC data colored by the volume (a) and trace (b) of the FIM.}
    \label{fig:FIM_ipsc}
    \vspace{-1mm}
\end{figure}

\vspace{-1mm}
\begin{figure}[htb]
    \centering
    \includegraphics[width=1\columnwidth]{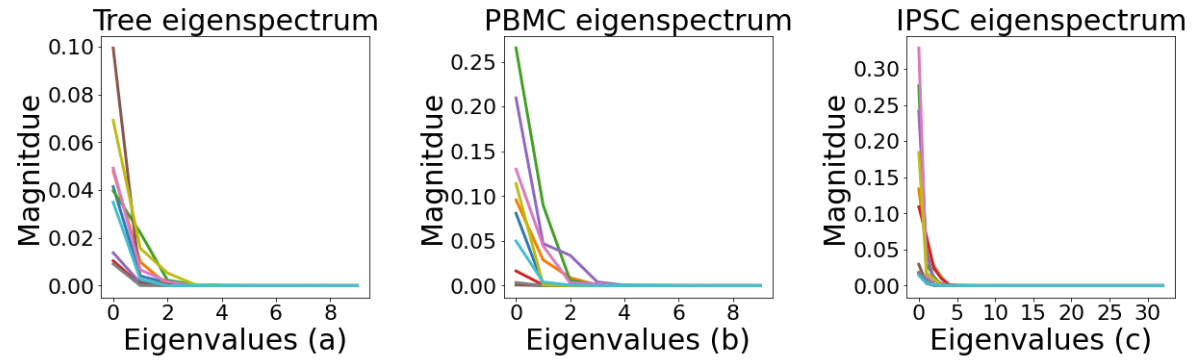}
    \caption{NeuralFIM eigenspectrums for different datasets: (a) Tree (b) PBMC (c) IPSC.}
    \label{fig:FIM_eigen}
    \vspace{-1mm}
\end{figure}

The same analysis was carried out for Induced Pluripotent Stem Cell (IPSC) data \cite{zunder2015continuous} measured using a different single cell technology: mass cytometry, which measures protein abundances. In this dataset, fibroblasts are being reprogrammed into pluripotent stem cells---a process that reverses natural differentiation (potentially for therapeutic purposes). The neural FIM embedding correctly shows a `Y' shape corresponding to the two branches described in \cite{zunder2015continuous}. One branch is successfully reprogramming and the other corresponds to failed reprogramming.  We again embedded points using neural-FIM on the IPSC data and color by volume and trace in Figures \ref{fig:FIM_ipsc}A and \ref{fig:FIM_ipsc}B, respectively. Here, the trace colored along the manifold reaches its highest magnitude where there are more axes of potential change and its lowest magnitude along the edges and tails where the datapoints do not have many directions to go. We also show the eigenspectrum of the FIM for all embeddings in Figures \ref{fig:FIM_eigen}A, \ref{fig:FIM_eigen}B, and \ref{fig:FIM_eigen}C which can have utility for discerning the number and index of relevant axes of information during the neural-FIM mapping. 
Using the same line of reasoning, we can look at the decay of the FIM eigenspectra for points located on different regions (e.g. sparse vs.\ dense) as well as the eigenvectors to extract insightful information about point cloud data.

\subsection{Learning Geodesics using Neural ODEs}

\begin{figure}[H]
    \centering
    \includegraphics[width=1\columnwidth]{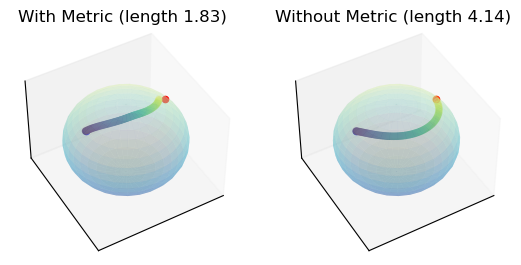}
    \caption{Learned paths from a Neural ODE, the path with the spherical metric learn a curve closer to the geodesic.}
    \label{fig:geo_sphere}
\end{figure}

We test the training objective defined in \Eqref{eq:ode_loss}, to learn the geodesic on a sphere of radius one. In spherical coordinates $(\theta,\psi)\in [0,\pi]\times[0,2\pi]$ we know the metric $ds^2 = d\theta^2 + \sin^2\theta d\psi^2$, and can thus compute the length of any path on the sphere. We train a neural ODE with three layers of width $64$ and SeLU activation function between each layer. We approximate the integration with the Runge-Kutta solver of order four. In Figure~\ref{fig:geo_sphere} we learn the path between two points above the equator $(\pi/4,0)$ to $(\pi/4,\pi)$; the geodesic passes closer to the north pole. We see that the path of the neural ODE trained to minimize the length indeed finds the right geodesic, while the one trained only to minimize the MSE with the final time point learns a longer path ($\theta$ appears to be constant along the path).

\begin{figure}[thb]
    \centering    \includegraphics[width=1\columnwidth,height=0.2\textheight]{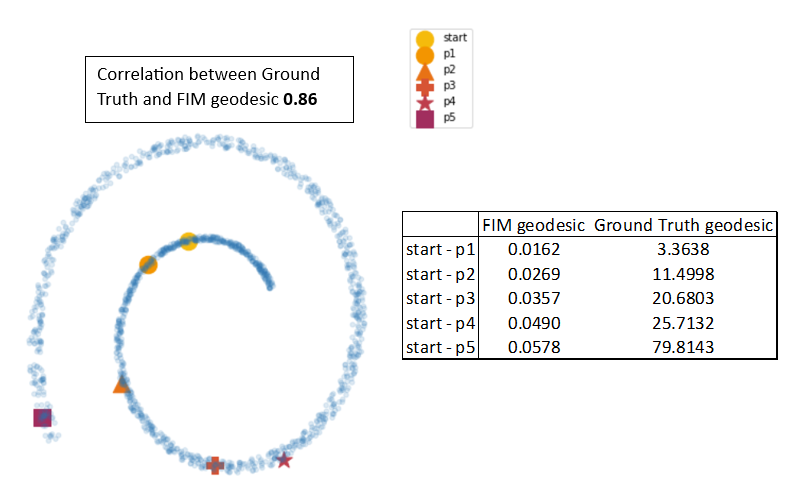}
    \caption{Ground Truth and NeuralFIM geodesics between a fixed point and 5 randomly selected points on swiss roll dataset}
    \label{fig:geo_swiss}
\end{figure}

In Figure~\ref{fig:geo_swiss} we validate our method of computing geodesics with the learned FIM's against ground truth geodesics on a swiss roll. For a quick sanity check, one can see that increasing the path length from the start to end point on the swiss roll corresponds to an increase in geodesic magnitude for the FIM and ground truth geodesic. Additionally, we see the FIM geodesic is strongly correlated with the ground truth geodesic.

\begin{figure}[thb]
    \centering
    \includegraphics[width=.7\columnwidth]{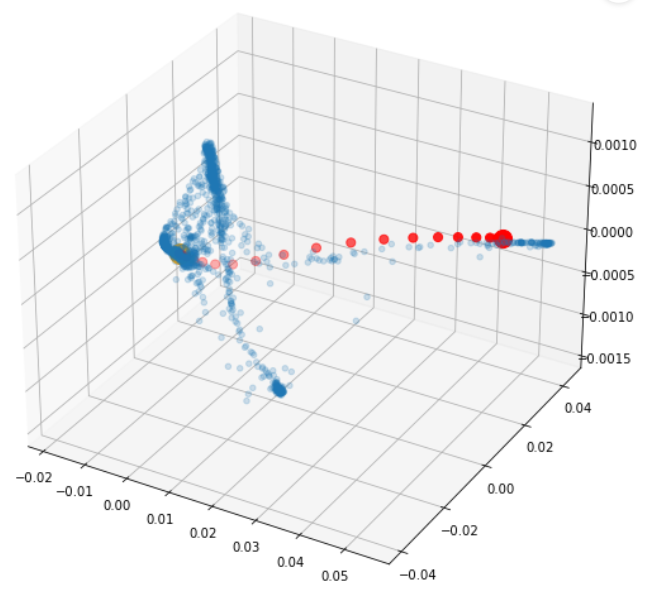}
    \caption{Learned neural ODE geodesic from the IPSC data connecting an initial cell to its final reprogrammed destination. Notably, the trajectory goes along the manifold. }
    \label{fig:geo_ipsc}
\end{figure}

Figure \ref{fig:geo_ipsc} shows a geodesic path that goes from a fibroblast cell (at the intersection of the branches) towards a reprogramming endpoint. We see that with only the endpoints specificed the neural ODE is able to find the path of reprogramming, i.e., as the geodesic path along the manifold. Generally, this could be highly useful for finding differentiation and progression paths from single cell data.  

\section{Conclusion}

Here we presented neural FIM, a novel method for learning a Riemannian metric from high dimensional point cloud data. We utilize FIM as a metric for data points represented by data diffusion probability distributions. Such distributions are computed via a Markovian diffusion operator which is used in diffusion maps, PHATE, diffusion pseudotime and other popular data science techniques. Neural FIM then allows us to compute underlying manifold information such as volume, and geodesic distances in this space in a way that is extensible to new datapoints. To compute geodesics in data space we introduce an auxiliary neural ODE network that minimizes length computed using the FIM on learned curve between two datapoints. We showcase neural FIM on PHATE parameter selection, and in finding the underlying manifold of toy data as well as single cell data.

\section{Acknowledgements}

This research was enabled in part by compute resources provided by Mila (mila.quebec) and Yale. It was partially funded and supported by ESP \textit{Mérite} [G.H.], CIFAR AI Chair [G.W.], NSERC Discovery grant 03267 [G.W.], NIH grants (1F30AI157270-01, R01HD100035, R01GM130847, R01GM135929) [G.W.,S.K.], NSF Career grant 2047856 [S.K.], the Chan-Zuckerberg Initiative grants CZF2019-182702 and CZF2019-002440 [S.K.], the Sloan Fellowship FG-2021-15883 [S.K.], and the Novo Nordisk grant GR112933 [S.K.]. The content provided here is solely the responsibility of the authors and does not necessarily represent the official views of the funding agencies. The funders had no role in study design, data collection \& analysis, decision to publish, or preparation of the manuscript.

\bibliography{Fisher_ICML_2023}
\bibliographystyle{icml2023}

\newpage
\appendix
\onecolumn
\section*{A. Sensitivity Analysis of neuralFIM hyperparameters}

Here, we complete a sensitivity analysis of a selection of neuralFIM hyperparameters on the tree dataset. We perturb the k-nearest neighbors (kNN), the noise level, and the Encoder Dimensions to understand whether our is robust with respect to the aforementioned hyperparameters. See Table \ref{tab: Sensitivity Analysis} and Figure \ref{fig:sensitvity_analysis} for empirical results.

\begin{figure}[H]
    \centering
    \includegraphics[scale=0.7]{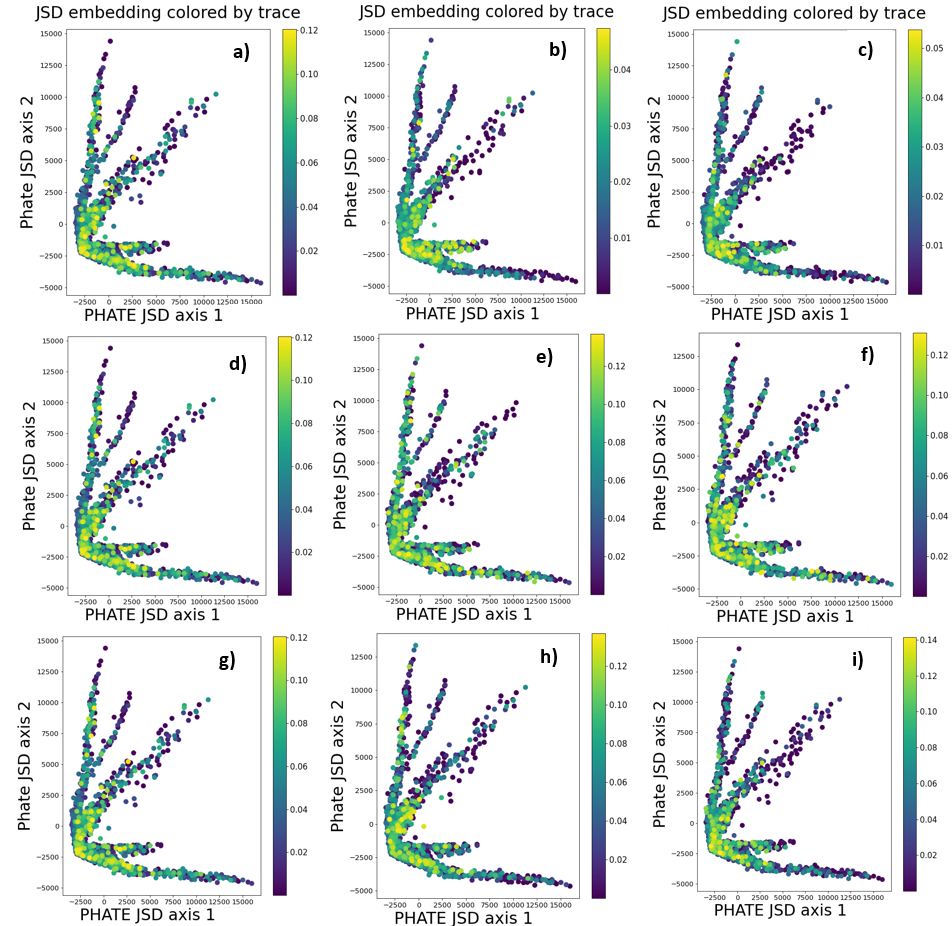}
    \caption{Visualization of PHATE-JSD embedding of tree dataset colored by FIM trace computed with NeuralFIM. We vary the kNN for the PHATE kernel, the noise level added to the input dataset and the dimensions of the autoencoder. One can readily see that neuralFIM is robust with respect to the perturbed hyperparameters. In row 1 we perturb the kNN across figures a-c [knn=5,10,15], respectively, in row 2 (d-f) we perturb the noise level added to the data input to the autoencoder [noise level = 0.0005,0.0010,0.0015], and in row 3 (g-i) we perturb the encoder dimension(ED) [ ED=100,100,50; ED=100,80,30; ED=100,70,20 ].}
    \label{fig:sensitvity_analysis}
    \vspace{-5mm}
\end{figure}

\begin{table}[H]

    \caption{Sensitivity Analysis: Correlation of FIM trace between encoder noise, embedding dimension, and KNN for tree dataset}
    \label{tab: Sensitivity Analysis}
    \centering
    \begin{tabular}{c|ccc | c|ccc | c|ccc}
    \toprule
        \multicolumn{4}{c}{\textbf{KNN}} & \multicolumn{4}{c}{\textbf{Noise Level}} & \multicolumn{4}{c}{\textbf{Embedding Dimension}} \\
        \midrule
          & \emph{5} & \emph{10} & \emph{15} &  & \emph{0.0005} & \emph{0.0010} & \emph{0.0015} &  & \emph{20} & \emph{30} & \emph{50} \\
          \hline
        \emph{5} & 1 & 0.9972 & 0.9989 & \emph{0.0005}& 1 & 0.9981 & 0.9990 & \emph{20} & 1 & 0.9971 & 0.9944\\
        \emph{10} & 0.9972& 1 & 0.9985 &\emph{0.0010} & 0.9981 & 1 & 0.9994 & \emph{30} & 0.9971 & 1 & 0.9985\\
        \emph{15} & 0.9989 & 0.9985 & 1 & \emph{0.0015} & 0.9990 & 0.9994 & 1 & \emph{50} & 0.9944 & 0.9985 & 1\\
        \bottomrule
    \end{tabular}

\end{table}

\subsection*{A.1 Algorithm}

Below we describe the probability distribution constructed for FIM computation. 

\begin{algorithm}[ht]
  \caption{Phate Fisher Information Distribution}
  \label{alg:phate_fim}
\begin{algorithmic}
\State {\bfseries Input:} $N \times d$ dataset $X$, matrix diffusion time $t$
\State {\bfseries Returns:} $N \times N$ diffusion potential matrix, $ \mU $
\State $\mD_{ij} \gets \|\mX_i - \mX_j\|_2$
\State $\mA \gets \mathrm{kernel}(\mD)$
\State $\mQ \gets \mathrm{Diag}(\mA \mathbf{1})$
\State $\mK \gets \mQ^{-1} \mA \mQ^{-1}$
\State $\mP \gets \mathrm{RowNormalize}(\mK)$
\LComment{Here $\log$ is applied elementwise, power is matrix power.}
\State \Return $ \mU \gets \log (\mP^{t})$
\end{algorithmic}
\end{algorithm}

\subsection*{A.2 Connection between FIM and KL divergence}

In \cite{cover2006elements} the Fisher Information Metric is derived as the second derivative of KL-divergence. To derive this the authors consider two probability distributions $P(x)$ and $P(y)$ that are infinitesimally close to one another. 
$P(y)=P(x)+\sum_j \Delta x_j \frac{\partial{P}}{\partial{x_j}}$ where $\Delta x_j$ is an infinitesimally small change of $x$ in the $j$ direction.

Since KL-divergences are $0$ when two distributions are equal to one another, they use a second order Taylor expansion of the KL-divergence as given by: 

$g_x(y)=KL(P(x)||P(y)=\frac{1}{2} \sum_{i,j} \Delta x_i \Delta x_j g_{ij}(x)$

\subsection*{A.3 JS Distance PHATE}

Typically the PHATE dimensionality reduction method works along the following steps: 
\begin{itemize}
    \item Compute diffusion operator $P$ from data using an alpha-decay kernel given in \cite{moon_visualizing_2019}. 
    \item Compute potential distances which are $M$-divergences between rows of the diffusion operator $P(x,\cdot)$ as given by $pdist(i,j)=\sqrt{\sum_k (\log(P^t(i,k)-P^t(j,k))}$ between points $i$ and $j$.
    \item Use the potential distance matrix as input to metric MDS to reduce to two dimensions. 
\end{itemize}

To train the neural FIM, we instead replace the $M$-divergence in PHATE with JS distance given by:

\begin{align*}
\scriptsize
&JS(\mP_n^t(i,\cdot),\mP_n^t(j,\cdot)):= \\ & \tfrac{1}{2} KL((\mP_n^t(i,\cdot)||M)+KL((\mP_n^t(j,\cdot)||M),
\end{align*}
where $M := (1/2) (\mP_n^t(i,\cdot)+\mP_n^t(j,\cdot))$.

Then we use the same metric MDS steps to reduce to an arbitrary $k$, though not typically $2$ dimensions. We note that this preserves manifold structure as well as or better than the originally proposed M-divergences. In Figure \ref{fig:EmbeddingComp} we show embeddings of artificially generated tree-structured data with original PHATE on the left and neural FIM trained with JSD-PHATE on the right. The embeddings look similar with the JSD embedding looking even more denoised than the original PHATE embedding.

\begin{figure}[H]
    \centering
    \includegraphics[scale=0.7]{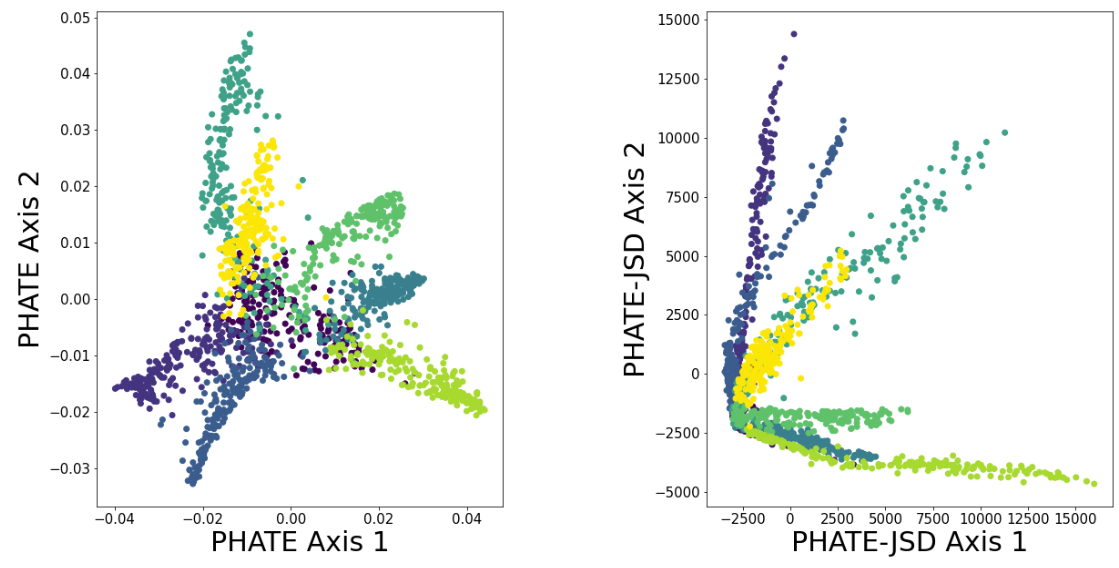}
    \caption{Embeddings of artificially generated tree data with PHATE and neural FIM trained with JSD-PHATE}
    \label{fig:EmbeddingComp}
    \vspace{-5mm}
\end{figure}

\subsection*{A.4 Experimental details}

 Each dataset was run with the same neural network parameters: Encoding Layers = [100,100,50] (for k=50); [100,80,30] (for k=30); [100,70,20] (for k=20) where k = latent dimensions, 150 epochs, ReLu activation between encoding layers, and using the AdamW optimizer with learning rate = 1e-4. For the neuralODE, we use 3 hidden layers [64,64,64] and use Runge-Kutta for the ODE solver. For the experimental results, we use 20 time steps between start and end point and train for 250 epochs again using the AdamW optimizer with learning rate = 1e-4.

\end{document}